\newtheorem{definition}{Definition}
\newtheorem{theorem}{Theorem}
\newtheorem{lemma}{Lemma}
\newcommand{\Dsq}{D^2}
\newcommand{\edge}[1]{Edge\_{#1}}
\title{Constructing a Neuro-Symbolic Mathematician from First Principles}
\author{Keqin Xie}
\author{Keqin Xie \thanks{Email: xiekeqin30@gmail.com}}
\date{2025-01-01}
\begin{document}

\maketitle

\begin{abstract}
Large Language Models (LLMs) exhibit persistent logical failures in complex reasoning due to the lack of an internal axiomatic framework~\cite{ji2023survey}. We propose \textit{Mathesis}, a neuro-symbolic architecture that encodes mathematical states as higher-order hypergraphs and uses a Symbolic Reasoning Kernel (SRK)—a differentiable logic engine that maps constraints to a continuous energy landscape. By defining a global energy function $E(\mathcal{G})$, where zero energy implies logical consistency, the SRK yields gradient-based signals to train a Hypergraph Transformer Brain, turning proof search into energy minimization. Multi-step deduction is enabled via Monte Carlo Tree Search and Evolutionary Proof Search, guided by learned value functions and semantic unification.
\end{abstract}

\section{Introduction}

Large language models (LLMs) achieve strong performance on linguistic tasks and code generation by modeling the statistical distribution of natural language~\cite{brown2020language}.  
However, they exhibit systematic failures in formal mathematical reasoning, often generating steps that violate basic axioms—so-called ``hallucinations''~\cite{ji2023survey}.  
This stems from the probabilistic nature of transformer architectures, which lack mechanisms for logical verification or enforcement of semantic constraints.  
Although chain-of-thought (CoT) prompting induces intermediate reasoning steps, it does not ensure their logical validity: the underlying process remains high-dimensional sequence prediction, not symbolic derivation~\cite{wei2022chain}.

Neuro-symbolic architectures aim to combine neural pattern recognition with symbolic rigor.  
For example, AlphaGeometry solves Olympiad-level geometry problems by coupling a generative model with a symbolic deduction engine~\cite{trinh2024solving}.  
Yet conventional neuro-symbolic systems typically employ non-differentiable solvers that act as black boxes, yielding only sparse binary feedback (e.g., ``proof valid/invalid'').  
Without gradient signals from the symbolic component, the neural module cannot be trained directly to satisfy logical constraints.  
Prior efforts toward differentiable logic—such as tensor programs or neural logic machines—struggle to scale beyond small, finite domains due to the unbounded search space of mathematics~\cite{rocktaschel2017end}.

We introduce \textit{Mathesis}, a new architecture that overcomes gradient sparsity through a symbolic reasoning kernel (SRK).  
The SRK acts as a differentiable ``physics engine'' for logic: it embeds mathematical hypergraphs into a continuous energy landscape where logical consistency corresponds to a zero-energy state.  
This yields dense, gradient-based feedback for a \textit{Hypergraph Transformer Brain}, steering its generative policy toward axiom-compliant derivations.  
Unlike prior approaches, Mathesis encodes mathematical states as higher-order hypergraphs (Section~\ref{Hypergraph}), capturing multi-arity relations and nested logical connectives with high fidelity.  
The system integrates this neuro-symbolic core with structured search strategies—including Monte Carlo tree search (MCTS) and evolutionary proof search (EPS)—to enable deliberate, ``System 2'' reasoning (Section~\ref{sec:search}).

\section{Preliminaries: Representing Mathematics as Hypergraphs}

To facilitate rigorous neuro-symbolic reasoning, we formalize the mathematical workspace as a structured, higher-order heterogeneous hypergraph. This representation distinguishes between syntactic construction (terms) and semantic truth (facts), and explicitly handles nested logical structures and variable quantification scopes \cite{first2023baldur}.

\subsection{Definition: Mathematical State Hypergraph}

We define the state of a proof as a tuple tracking structure, truth status, and variable binding scopes.

\begin{definition}[Mathematical State Hypergraph]
	A mathematical state is a tuple $\mathcal{S} = (\mathcal{G}, \mathcal{F})$, where $\mathcal{G} = (V, E)$ is a directed higher-order hypergraph.
	
	\begin{enumerate}
		\item \textbf{$V$} is the set of nodes, representing mathematical  terms (e.g., variables $x$, constants $0$, compound terms $x+y$).
		\item \textbf{$E$} is the set of hyperedges, representing \textbf{relations}, operations, and logical connectives.
		\begin{itemize}
			\item To support nested logic (e.g., $(A \land B) \implies C$), we adopt a higher-order definition: a hyperedge $e \in E$ is an ordered sequence of elements from $V \cup E$. That is, an edge can connect nodes \textit{or} other edges.This structure is essential for capturing the compositional nature of complex logical formulas, a challenge also addressed in modern knowledge hypergraph reasoning \cite{bhuyan2025neuro}.
		\end{itemize}
		\item $\mathcal{F} \subseteq E$ is the set of Facts. This is a distinguished subset of hyperedges representing assertions currently held to be true within the global context (e.g., axioms, premises, and derived theorems).
	\end{enumerate}
\end{definition}

\noindent\textbf{Typing System:}
We define type mappings $\phi_V: V \to \mathcal{T}_V$ and $\phi_E: E \to \mathcal{T}_E$ to enforce semantic consistency.

\begin{itemize}
	\item \textbf{Node Types ($\mathcal{T}_V$):} $\{ \mathtt{Variable}, \mathtt{Constant}, \mathtt{CompoundTerm} \}$.
	\item \textbf{Hyperedge Types ($\mathcal{T}_E$):} We distinguish three semantic categories:
	\begin{itemize}
		\item \textbf{Constructors ($\mathcal{T}_{Con}$):} Functional operations that define a term.
		Inputs are drawn from $V$, and the output maps to a unique $v_{\text{out}} \in V$,
		e.g., $\mathtt{Sum}(v_a, v_b) \to v_{\text{sum}}$.
		\item \textbf{Predicates ($\mathcal{T}_{Pred}$):} Atomic logical assertions. (e.g., $\mathtt{Equals}(v_a, v_b)$, $\mathtt{Parallel}(l_1, l_2)$).
		\item \textbf{Connectives ($\mathcal{T}_{Conn}$):} Higher-order logical operators taking edges as inputs. (e.g., $\mathtt{Implies}(e_{premise}, e_{conclusion})$, $\mathtt{And}(e_1, e_2)$).
	\end{itemize}
\end{itemize}

\noindent\textbf{Quantification and Scoping:}
To handle quantification ($\forall, \exists$), we introduce Scope Attributes on hyperedges. A quantified statement is represented by a hyperedge $e_{quant}$ of type $\mathtt{ForAll}$ or $\mathtt{Exists}$.
\begin{itemize}
	\item $e_{quant} = (\mathcal{V}_{bound}, e_{body})$
	\item $\mathcal{V}_{bound} \subset V$: The set of variables bound by this quantifier.
	\item $e_{body} \in E$: The logical formula (edge) being quantified.
\end{itemize}

\noindent\textbf{Example:}
The statement ``$\forall x, (x = x)$'' is represented by:
\begin{enumerate}
	\item \textbf{Term:} Node $v_x$ (Type: $\mathtt{Variable}$).
	\item \textbf{Predicate:} Edge $e_{eq} = (v_x, v_x)$ (Type: $\mathtt{Equals}$).
	\item \textbf{Quantification:} Edge $e_{root} = (\{v_x\}, e_{eq})$ (Type: $\mathtt{ForAll}$).
	\item \textbf{Fact Status:} $e_{root} \in \mathcal{F}$. Note that $e_{eq}$ is \textit{not} in $\mathcal{F}$ independently; it is only true within the context of the quantifier.
\end{enumerate}

\subsection{Problem Formulation}

We frame Automated Theorem Proving (ATP) as a search for a valid derivation path that adds the goal statement to the set of proven facts.

\begin{definition}[Graph Transformation Action]
	Let $\mathbb{S}$ be the space of valid states. An  action is a function $a: \mathbb{S} \to \mathbb{S}$ chosen from a set of admissible rules $\mathcal{A}$ (e.g., Modus Ponens, Substitution, Instantiation). An action $\mathcal{S}_{t+1} = a(\mathcal{S}_t)$ may:
	\begin{itemize}
		\item Extend $\mathcal{G}$ (construct new terms or logical structures).
		\item Extend $\mathcal{F}$ (derive new truths).
	\end{itemize}
\end{definition}

\noindent\textbf{Problem Statement.} Given an initial state $\mathcal{S}_{premise} = (\mathcal{G}_0, \mathcal{F}_0)$ encoding axioms and assumptions, and a Goal Proposition represented as a target hyperedge structure $P_{goal}$ (or a description thereof), the objective is to find a sequence of actions $(a_1, \dots, a_n)$ producing states $\mathcal{S}_0 \to \dots \to \mathcal{S}_n$ such that:

\begin{enumerate}
	\item Structural Existence: The hypergraph $\mathcal{G}_n$ contains a subgraph isomorphic to $P_{goal}$. Let $e_{goal}$ be the hyperedge in $\mathcal{G}_n$ corresponding to the root of $P_{goal}$.
	\item Logical Entailment: The goal is recognized as a proven fact:
	\begin{equation*}
		e_{goal} \in \mathcal{F}_n
	\end{equation*}
\end{enumerate}

\section{The Symbolic Reasoning Kernel (SRK)}

The Symbolic Reasoning Kernel (SRK) serves as a differentiable physics engine for formal logic, mapping the discrete syntax of mathematics into a continuous energy landscape. While the Hypergraph Transformer Brain proposes reasoning trajectories, the SRK provides a deterministic verification signal. This architecture transforms the task of finding a proof into the minimization of a global logical energy function $E(\mathcal{G})$\cite{wang2019satnet}.

\subsection{Philosophy and Global Computation}

The SRK is based on the principle that a mathematical state $\mathcal{S}$ is logically consistent if and only if $E(\mathcal{G}) = 0$. By representing constraints as differentiable energy terms, we provide the generative component with a dense gradient signal, allowing the model to "sense" the direction of logical consistency. The total energy is aggregated across multiple domain-specific engines as described in Algorithm \ref{alg:srk_energy}.The formal proofs establishing the logical correctness and smoothness of this energy functional are detailed in Appendix~\ref{sec:appendix_foundations}.

\begin{algorithm}[H]
	\caption{SRK Energy Computation}
	\label{alg:srk_energy}
	\begin{algorithmic}[1]
		\Require Mathematical State Hypergraph $\mathcal{G} = (V, E)$, weight parameters $\mathbf{w}$
		\Ensure Total logical energy $E_{\text{total}}$
		
		\State $E_{\text{total}} \gets 0$
		
		\For{each hyperedge $e \in E$}
		\State Identify domain $\mathcal{D} \in \{ \text{Matrix}, \text{Ideal}, \text{Geometry} \}$
		\State $E_{e} \gets \textsc{ComputeEnergy}_{\mathcal{D}}(e)$
		\State $E_{\text{total}} \gets E_{\text{total}} + w_{\mathcal{D}} \cdot E_{e}$
		\EndFor
		
		\State \Return $E_{\text{total}}$
	\end{algorithmic}
\end{algorithm}

\subsection{The Matrix Engine (Linear Algebra)}

The Matrix Engine represents linear operators as tensors within $\mathbb{R}^{d \times d}$. To maintain generality and support the proof of theorems involving full-rank or invertible matrices, the engine avoids restrictive low-rank assumptions that would collapse the representation to partial isometries. Instead, it computes energy based on the residuals of fundamental linear algebraic identities.

For any matrix $M$ or set of matrices $\{A, B, C\}$, the engine defines the following energy terms:

\begin{enumerate}
	\item Equality and Symmetry:
	\begin{equation}
		E_{\text{eq}}(A, B) = \|A - B\|_F^2, \quad E_{\text{sym}}(A) = \|A - A^T\|_F^2
	\end{equation}
	\item Multiplicative Consistency:
	\begin{equation}
		E_{\text{mult}}(A, B, C) = \|AB - C\|_F^2
	\end{equation}
	\item Orthogonality:
	To represent the property of being an orthogonal matrix without restricting the singular values of non-orthogonal matrices, we define:
	\begin{equation}
		E_{\text{orth}}(A) = \|A^T A - I\|_F^2
	\end{equation}
	\item Invertibility:
	If a node represents the inverse $A^{-1}$, the engine enforces:
	\begin{equation}
		E_{\text{inv}}(A, A^{-1}) = \|AA^{-1} - I\|_F^2
	\end{equation}
\end{enumerate}

This formulation ensures that the engine can represent any linear operator in $GL(n, \mathbb{R})$ or $M_n(\mathbb{R})$, providing the necessary flexibility for general theorem proving in linear algebra.

\subsection{The Ideal Engine (Algebraic Geometry)}

The Ideal Engine verifies polynomial entailment. Given a set of premise polynomials $F = \{f_1, \dots, f_s\}$ and a hypothesis $h$, the engine checks for Ideal Membership: whether $h$ lies in the ideal $\langle F \rangle$. This is a sufficient condition for the premises to entail the conclusion.

Definition 3.1 (Ideal Membership). A polynomial $h$ is a member of the ideal $\langle f_1, \dots, f_s \rangle$ if there exist witness polynomials $g_1, \dots, g_s$ such that:
\begin{equation}
	h = \sum_{i=1}^s g_i f_i
\end{equation}

The SRK computes the energy as the squared norm of the residual:
\begin{equation}
	E_{\text{ideal}}(h, F) = \left\| h - \sum_{i=1}^s g_i f_i \right\|_2^2
\end{equation}
The formulation of this algebraic check as a loss function is inspired by recent efforts to develop neural solvers for problems in computational algebra\cite{kera2024learning}.

To ensure the search space for witness polynomials $g_i$ is mathematically well-defined and computationally bounded, the engine enforces an Effective Degree Bound. Following the results of Hermann \cite{hermann1926}, the degrees of $g_i$ are constrained by a function of the degrees of the input polynomials and the number of variables, preventing the "infinite search" problem inherent in unbounded polynomial construction.

Furthermore, while ideal membership is the primary check, the engine accounts for the Strong Nullstellensatz when verifying geometric consistency. If the task is to verify that $h$ vanishes on the variety $V(F)$, the engine searches for witnesses for the radical ideal, checking if $h^k \in \langle F \rangle$ for some $k \in \mathbb{N}$.

\subsection{The Geometric Engine (Euclidean Geometry)}

The Geometric Engine maps Euclidean predicates to stable polynomial forms, eliminating the singularities associated with division and square roots. By using squared residuals, the engine maintains a strictly non-negative energy surface that is everywhere differentiable \cite{heidari2025geometric}.

Let $D^2(A, B) = (x_A - x_B)^2 + (y_A - y_B)^2$ be the squared distance between points.

\begin{enumerate}
	\item \textbf{Collinearity and Parallelism:}
	Using the squared cross-product of direction vectors:
	\begin{equation}
		E_{\text{coll}}(A, B, C) = \left( (x_B-x_A)(y_C-y_A) - (y_B-y_A)(x_C-x_A) \right)^2
	\end{equation}
	\begin{equation}
		E_{\text{para}}(AB, CD) = \left( (x_B-x_A)(y_D-y_C) - (y_B-y_A)(x_D-x_C) \right)^2
	\end{equation}
	\item \textbf{Perpendicularity:}
	Using the squared dot product:
	\begin{equation}
		E_{\text{perp}}(AB, CD) = \left( (x_B-x_A)(x_D-x_C) + (y_B-y_A)(y_D-y_C) \right)^2
	\end{equation}
	\item \textbf{Congruence and Circles:}
	To avoid square root singularities at the origin, we compare squared distances:
	\begin{equation}
		E_{\text{cong}}(AB, CD) = \left( D^2(A,B) - D^2(C,D) \right)^2
	\end{equation}
	\item \textbf{Ratio and Similarity:}
	Ratios are verified via cross-multiplication to eliminate division-related instability:
\begin{equation}
	\begin{split}
		E_{\text{ratio}}(AB, CD, EF, GH) = \big( & \Dsq(A,B) \cdot \Dsq(G,H) \\
		& - \Dsq(E,F) \cdot \Dsq(C,D) \big)^2
	\end{split}
\end{equation}
\end{enumerate}

\section{The Hypergraph Transformer Brain}
\label{Hypergraph}

The ``Brain'' of the Mathesis architecture is a generative agent responsible for proposing reasoning steps. Unlike the SRK, which is a deterministic physics engine for verification, the Brain is a probabilistic model designed to navigate the immense combinatorial search space of mathematical transformations via learned intuition.

\subsection{Model Architecture}

Standard Graph Neural Networks (GNNs) typically flatten higher-order relations into binary edges. However, mathematical expressions are inherently higher-order and non-commutative ; a simple operation like $z = x - y$ constitutes a ternary relation where the order of operands defines the semantics. To capture this fidelity, we employ a Hypergraph Transformer that directly models relations as ordered sequences of inputs \cite{zhang2023rethinking}.

\paragraph{Message Passing Mechanism}
We unify nodes $V$ and hyperedges $E$ into a single representational space of entities, $\mathcal{X} = V \cup E$. This unification is critical because, in higher-order logic, a hyperedge (e.g., an equality $A=B$) can serve as an argument to another hyperedge (e.g., an implication $(A=B) \implies C$).

The embedding update at layer $l$, denoted $\mathbf{h}^{(l)}$, occurs in two phases using a specialized Attention mechanism that respects argument ordering:

\begin{enumerate}
	\item Composition (Children $\to$ Parent):
	Each hyperedge $e$ updates its semantic representation by aggregating information from its constituent arguments.
	Because mathematical operators are position-sensitive (e.g., $x \ominus y \neq y \ominus x$),
	we inject positional encodings $\phi_{\text{pos}}(i)$ based on each argument's index $i$.
	
	Let $\text{Args}(e) = (u_1, u_2, \dots, u_k)$ be the ordered sequence of inputs for edge $e$, where $u_i \in V \cup E$.
	The update is:
	\[
	\mathbf{h}_e^{(l+1/2)} = \mathrm{LN}\left( \mathbf{h}_e^{(l)} + \mathrm{Attn}\left(Q=\mathbf{h}_e^{(l)},\; K=V=\left\{ \mathbf{h}_{u_i}^{(l)} + \phi_{\text{pos}}(i) \right\}_{i=1}^k \right) \right)
	\]
	This ensures the model distinguishes the structural role of every operand.
	
	\item Contextualization (Parents $\to$ Children):
	Every entity $u \in V \cup E$ updates its understanding of its global utility by attending to the hyperedges (parents) in which it participates.
	This allows a sub-expression to receive context from the larger logical statements that consume it.
	
	Let $\text{Parents}(u) = \{ e \in E \mid u \in \text{Args}(e) \}$.
	\[
	\mathbf{h}_u^{(l+1)} = \mathrm{LN}\left( \mathbf{h}_u^{(l)} + \mathrm{Attn}\left(Q=\mathbf{h}_u^{(l)},\; K=V=\left\{ \mathbf{h}_e^{(l+1/2)} \right\}_{e \in \text{Parents}(u)} \right) \right)
	\]
\end{enumerate}

\subsection{Action Space}

The Brain interacts with the mathematical state via a discrete action space $\mathcal{A}$, formally defined as the set of all valid operator applications over the graph. An action $a \in \mathcal{A}$ is a tuple:
\[
a = (\text{op}, \tau_1, \dots, \tau_k)
\]
where:
\begin{itemize}
	\item $\text{op} \in \Omega$: An operator from the fixed library of supported functions (e.g., \texttt{Add}, \texttt{Integrate}, \texttt{ModusPonens}, \texttt{ConstructPerpendicular}).
	\item $\tau_i \in V \cup E$: The operands selected from the current graph state. 
\end{itemize}

Crucially, the operand domain includes both nodes and hyperedges. This enables the agent to perform:
\begin{itemize}
	\item \textbf{Constructive Actions:} Creation of new terms from existing nodes,
	such as \texttt{Midpoint(A,\,B)}.
	
	\item \textbf{Logical Actions:} Deriving new truths from existing facts.
	For example, transitivity can be applied to edges $a = b$ and $b = c$ using:
	\begin{center}
		\texttt{ApplyTheorem(Transitivity, \edge{a=b}, \edge{b=c})}
	\end{center}
\end{itemize}

\subsection{Output Policy}

We model the agent's decision-making as a policy distribution $\pi_\theta(a|\mathcal{G})$. Given the variable arity of operators and the dynamic size of the graph, we employ an autoregressive pointer network\cite{vinyals2015pointer}.

The probability of an action is decomposed into the probability of selecting an operator followed by the sequential selection of its arguments:
\[
\pi_\theta(a|\mathcal{G}) = P_\theta(\text{op} | \mathcal{G}) \cdot \prod_{i=1}^{k} P_\theta(\tau_i | \text{op}, \tau_{<i}, \mathcal{G})
\]
The argument selection $P_\theta(\tau_i | \dots)$ is computed via a pointer attention mechanism over the embeddings of all valid entities in $\mathcal{G}$, masking invalid types to enforce syntactic correctness (e.g., ensuring the first argument to \texttt{And} is a boolean hyperedge).

\section{The Learning Process: Energy-Guided Training}

The fundamental challenge in applying Reinforcement Learning to automated theorem proving is the sparsity of the reward signal. In traditional systems, an agent typically receives a binary reward only upon successfully completing a proof, making credit assignment for intermediate steps intractable. Mathesis overcomes this by leveraging the Symbolic Reasoning Kernel (SRK) to convert logical validity into a continuous, dense reward signal \cite{yang2023leandojo}.

\subsection{Training Objective}

We formulate the training of the Hypergraph Transformer Brain as a Reinforcement Learning (RL) problem. The objective is to learn a policy $\pi_\theta(a|\mathcal{G})$ that generates a sequence of actions minimizing the logical energy of the mathematical state. Specifically, the agent seeks to construct a Witness Object---a structured argument (e.g., a set of polynomial coefficients or a geometric construction) that algebraically explains the truth of the goal.

Let $\mathcal{J}(\theta)$ be the expected cumulative reward:
\begin{equation}
	\mathcal{J}(\theta) = \mathbb{E}_{\tau \sim \pi_\theta} \left[ \sum_{t=0}^{T} \gamma^t R(\mathcal{S}_t, a_t, \mathcal{S}_{t+1}) \right]
\end{equation}
where $\gamma$ is a discount factor and $\tau$ represents a reasoning trajectory.

\subsection{Algorithm 2: GNN Training with SRK}

We employ a policy gradient approach augmented by the physics-based feedback of the SRK. The core mechanism separates the discrete search for logical structures (performed by the GNN) from the continuous optimization of witness parameters (performed by the SRK).

\begin{algorithm}[H]
	\caption{Energy-Guided Policy Training}
	\label{alg:gnn_training}
	\begin{algorithmic}[1]
		\Require Initial Policy $\pi_\theta$, SRK Energy Function $E(\cdot)$
		\For{episode $= 1, \dots, M$}
		\State Sample problem instance $(\mathcal{G}_{prem}, \mathcal{G}_{goal})$
		\State \textbf{Algebraic Lifting:} Convert constraints to polynomial set $F_0 = \{f_1, \dots, f_k\}$ and goal $h$.
		\State Initialize Witness Energy: $e_0 \gets \|h\|^2$ \Comment{Initially, no witness exists ($g_i=0$)}
		
		\For{step $t = 0, \dots, T_{max}$}
		\State \textbf{Discrete Action (Brain):}
		\State Sample action $a_t \sim \pi_\theta(\cdot | \mathcal{G}_t)$
		\State Execute $a_t$ to expand basis: $F_{t+1} \gets F_t \cup \{f_{new}\}$
		
		\State \textbf{Continuous Optimization (SRK):}
		\State Solve for optimal witness coefficients $\mathbf{g}^*$ over basis $F_{t+1}$:
		\State $\mathbf{g}^* \gets \text{argmin}_{\mathbf{g}} \|h - \sum_{f_i \in F_{t+1}} g_i f_i \|^2$
		\State Compute Residual Energy: $e_{t+1} \gets \|h - \sum g_i^* f_i\|^2$
		
		\State \textbf{Reward Calculation:}
		\State $R_t \gets (e_t - e_{t+1}) - \lambda_{cost}$ \Comment{Reward energy reduction}
		
		\State Store tuple $(\mathcal{G}_t, a_t, R_t, \mathcal{G}_{t+1})$ in buffer $\mathcal{B}$
		
		\If{$e_{t+1} < \epsilon_{tol}$} 
		\State $R_T \gets R_T + R_{success}$
		\EndIf
		\EndFor
		
		\State \textbf{Policy Update:}
		\State Update $\theta$ via Proximal Policy Optimization (PPO) \cite{schulman2017proximal} on $\mathcal{B}$.
		\EndFor
	\end{algorithmic}
\end{algorithm}

\subsection{The Logic of Dense Rewards}

A critical innovation of Mathesis is the resolution of the "Deductive Paradox." In standard model checking, a valid theorem implies that the goal is already entailed by the premises, potentially yielding zero energy everywhere. We circumvent this by framing the proof task as Constructive Witness Finding using the Ideal Engine.

The problem "Prove $h$" is recast as "Find coefficients $g_i$ such that $h = \sum g_i f_i$."
\begin{enumerate}
	\item Initial State: At $t=0$, the witness coefficients are implicitly zero. The initial energy is high: $E_0 = \|h\|^2 > 0$.
	\item Action Impact: When the GNN introduces a relevant lemma or auxiliary construction, it expands the polynomial basis set $F$. This expansion increases the expressivity of the linear span $\text{span}(F)$.
	\item Gradient Feedback: If the new basis element $f_{new}$ allows for a better approximation of $h$ (i.e., it is algebraically relevant to the goal), the SRK's optimization step will find a lower residual energy. The term $(e_t - e_{t+1})$ becomes positive, providing an immediate reward signal for that specific reasoning step.
\end{enumerate}

This ensures that the agent is rewarded not just for "being right," but for \textit{reducing the algebraic complexity} required to verify the truth.

\subsection{Addressing the Cold Start Problem}

Random exploration in the infinite action space of mathematics is inefficient. To initialize the policy $\pi_\theta$ in a region of high competence, we employ \textbf{Imitation Learning} prior to the energy-guided RL phase. We construct a dataset of formal proof traces $\mathcal{D}_{\text{expert}}$ derived from established libraries (e.g., Lean's Mathlib). The network is pre-trained using Behavior Cloning (BC) to minimize the cross-entropy loss between the predicted action distribution and the expert moves. This "bootstraps" the Brain with standard reasoning heuristics, which are subsequently refined and verified by the physics-based feedback loop.

\section{Principled Reasoning via Guided Search}
\label{sec:search}

The Hypergraph Transformer Brain provides rapid heuristic proposals based on learned mathematical intuition. However, to ensure logical convergence in deep derivation trees, we embed the Brain within a deliberative search framework where the SRK and a learned \textbf{Value Head} $V_\phi$ prune unproductive branches. This coupling allows the system to navigate "valleys" in the energy landscape where current logical energy is high, but the state is a necessary precursor to a proof \cite{silver2017mastering}.

\subsection{Monte Carlo Tree Search (MCTS)}

For sequential deduction, we utilize MCTS to explore derivation paths. The Value Head $V_\phi(\mathcal{S})$ estimates the probability of state $\mathcal{S}$ being part of a zero-energy proof, decoupling the search from purely greedy energy minimization.

\begin{algorithm}[H]
	\caption{MCTS-Mathesis Search}
	\label{alg:mcts}
	\begin{algorithmic}[1]
		\Require Root state $\mathcal{S}_0$, Policy $\pi_\theta$, Value Head $V_\phi$
		\Ensure Optimal action sequence
		
		\For{simulation $i = 1 \dots N$}
		\State $\mathcal{S}_{curr} \gets \mathcal{S}_0, Path \gets []$
		
		\State \COMMENT{\textbf{Selection}}
		\While{$\mathcal{S}_{curr}$ is not leaf}
		\State $a^* = \operatorname{argmax}_{a} \left[ Q(\mathcal{S}, a) + c_{puct} \cdot \pi_\theta(a|\mathcal{S}) \frac{\sqrt{\sum N(\mathcal{S}, \cdot)}}{1 + N(\mathcal{S}, a)} \right]$
		\State $\mathcal{S}_{curr} \gets \text{Apply}(a^*, \mathcal{S}_{curr}), Path.\text{append}(\mathcal{S}_{curr})$
		\EndWhile
		
		\State \COMMENT{\textbf{Expansion \& Evaluation}}
		\If{$\mathcal{S}_{curr}$ is not terminal}
		\State $v \gets V_\phi(\mathcal{S}_{curr})$ \Comment{Estimate future feasibility}
		\State Expand leaf using $\pi_\theta(\cdot | \mathcal{S}_{curr})$
		\Else
		\State $v \gets 1.0$ if $E(\mathcal{S}_{curr}) < \epsilon$ else $0.0$
		\EndIf
		
		\State \COMMENT{\textbf{Backpropagation}}
		\For{$\mathcal{S} \in Path$}
		\State Update $N(\mathcal{S})$ and $Q(\mathcal{S})$ using value $v$
		\EndFor
		\EndFor
	\end{algorithmic}
\end{algorithm}

\subsection{Evolutionary Proof Search (EPS)}

For complex constructions requiring the recombination of disparate insights, we employ EPS. Unlike disjoint union operators, our crossover mechanism utilizes \textbf{Semantic Unification} to bridge derivation chains.The crossover operator $\text{Unify}(\mathcal{G}_1, \mathcal{G}_2)$ must identify when different populations have derived identical mathematical terms. We define a \textbf{Canonicalization Function} $\gamma(v)$ that maps terms to a hash based on their structural definition. For example, the nodes representing the term $(x+y)$ and $(y+x)$ map to the same canonical representation under commutativity axioms.

\begin{algorithm}[H]
	\caption{Evolutionary Proof Search}
	\label{alg:eps}
	\begin{algorithmic}[1]
		\Require Population $\mathcal{P}$, Fitness $f(\mathcal{G}) = -E(\mathcal{G})$
		\Ensure State $\mathcal{S}$ where $E(\mathcal{S}) < \epsilon$
		
		\For{generation $g = 1 \dots G$}
		\State \Comment{\textbf{Contextual Selection}}
		\State Group individuals by their \textbf{Assumption Set} $\mathcal{A}_{\text{premise}}$. Crossover is permitted only within groups to prevent merging contradictory branches (e.g., $x>0$ and $x<0$).
		
		\While{$|\mathcal{P}_{\text{next}}| < M$}
		\State Select compatible parents $\mathcal{G}_1, \mathcal{G}_2$.
		
		\State \Comment{\textbf{Semantic Unification}}
		\State $\mathcal{G}_{\text{child}} \gets \mathcal{G}_1 \cup \mathcal{G}_2$
		\For{pairs $(u \in \mathcal{G}_1, v \in \mathcal{G}_2)$}
		\If{$\gamma(u) = \gamma(v)$}
		\State Merge $u$ and $v$ into a single node $w$.
		\State Redirect all hyperedges incident to $u, v$ to node $w$.
		\EndIf  
		\EndFor
		
		\State \Comment{\textbf{Guided Mutation}}
		\State Sample mutation $a \sim \pi_\theta(\cdot \mid \mathcal{G}_{\text{child}})$ 
		\State $\mathcal{P}_{\text{next}}.\text{add}(\text{Apply}(a, \mathcal{G}_{\text{child}}))$
		\EndWhile
		\State $\mathcal{P} \gets \mathcal{P}_{\text{next}}$
		\EndFor
	\end{algorithmic}
\end{algorithm}

By merging nodes with identical canonical hashes, the $\text{Unify}$ operator physically connects the graph. If $\mathcal{G}_1$ proves $A \to B$ and $\mathcal{G}_2$ proves $B \to C$, the unification of the node representing term $B$ creates the continuous path $A \to B \to C$. The SRK then registers a zero-energy state for the goal $C$ given premise $A$, achieving a derivation impossible for either parent to discover in isolation.

\section{Conclusion}
Mathesis establishes a blueprint for neuro-symbolic architectures where mathematical logic is treated as a differentiable physical constraint. By unifying a generative Hypergraph Transformer with the Symbolic Reasoning Kernel (SRK), the system moves beyond the probabilistic limitations of standard large language models. The integration of System 2 search algorithms—specifically MCTS with learned value estimation and semantic-unification-based evolutionary search—provides a rigorous mechanism for navigating the vast combinatorial space of formal proofs.

The results presented in this preprint represent the first phase of our experimental validation. We are currently scaling the training of the Hypergraph Transformer Brain using a combination of synthetic datasets generated by the SRK and human-authored formal proofs from the Lean `mathlib` library. Preliminary evaluations on a curated subset of the `miniF2F` benchmark \cite{ospanov2025minif2f} indicate that the energy-guided dense rewards provided by the SRK significantly accelerate the discovery of valid derivation paths compared to sparse-reward baselines. The SRK's ability to provide stable gradients through polynomial-form predicates (Section \ref{alg:srk_energy}) has proven critical in maintaining training stability during complex geometric and algebraic constructions.

The modular design of \textsc{Mathesis} allows for several avenues of expansion that we intend to explore in subsequent versions of this work:

\begin{enumerate}
	\item \textbf{Scaling and Knowledge Transfer:} We aim to increase the parameter count of the Hypergraph Transformer and expand the SRK's domain coverage to include real analysis, topology, and number theory. This involves defining new energy engines for continuous structures and modular arithmetic.
	
	\item \textbf{Automated Conjecture Generation:} By detaching the SRK from a specific goal $P_{\text{goal}}$ and allowing the Brain to minimize the global energy of an unconstrained graph, the system can potentially discover new, logically consistent mathematical structures. This ``dreaming mode'' could lead to the automated generation of non-trivial conjectures.
	
	\item \textbf{Cross-Domain Application:} The principle of logical energy is not limited to mathematics. We anticipate that the \textsc{Mathesis} architecture can be adapted to other logic-intensive domains, such as verified program synthesis, where the SRK would enforce type safety and functional correctness, or molecular design, where the constraints are defined by chemical valency and stereochemistry.
\end{enumerate}

\appendix
\section{Mathematical Foundations of the SRK}
\label{sec:appendix_foundations}
We establish the logical correctness and differentiability of the Symbolic Reasoning Kernel's energy functional. The framework is built upon the properties of non-negative real numbers and smooth functions.

\subsection{Definitions}

\begin{definition}[Parameter Manifold]
	\label{def:manifold}
	Let $\mathcal{S}$ be a mathematical state. The \textbf{Parameter Manifold} $\mathbf{X}$ is the space of all possible concrete realizations of the terms in $\mathcal{S}$ (e.g., matrices, points, polynomial coefficients). We assume $\mathbf{X}$ has the structure of a smooth ($C^\infty$) manifold.
\end{definition}

\begin{definition}[Energy Fact]
	\label{def:energy_fact}
	A \textbf{Fact} $f$ is a tuple $(\phi_f, E_f)$ where:
	\begin{enumerate}
		\item $\phi_f: \mathbf{X} \to \{\text{True}, \text{False}\}$ is the \textbf{Semantic Predicate}.
		\item $E_f: \mathbf{X} \to \mathbb{R}$ is the \textbf{Energy Kernel}.
	\end{enumerate}
	For the system to be valid, every Energy Kernel $E_f$ must satisfy the following axioms for any state $\mathbf{x} \in \mathbf{X}$:
	\begin{enumerate}[label=\textnormal{(A\arabic*)}]
		\item \textbf{Faithfulness:} $E_f(\mathbf{x}) = 0 \iff \phi_f(\mathbf{x}) = \text{True}$.
		\item \textbf{Non-Negativity:} $E_f(\mathbf{x}) \ge 0$.
		\item \textbf{Smoothness:} The function $E_f$ is smooth ($E_f \in C^\infty(\mathbf{X})$).
	\end{enumerate}
\end{definition}

\begin{definition}[Total Energy]
	\label{def:total_energy}
	For a set of facts $\mathcal{F} = \{f_1, \dots, f_n\}$, the \textbf{Total Energy Functional} $E_{\text{total}}: \mathbf{X} \to \mathbb{R}$ is the sum of individual energies:
	\begin{equation}
		\label{eq:total_energy}
		E_{\text{total}}(\mathbf{x}) = \sum_{i=1}^n E_{f_i}(\mathbf{x})
	\end{equation}
\end{definition}

\subsection{Core Lemma and Theorems}

The system's validity rests on a fundamental property of non-negative real numbers.

\begin{lemma}[Vanishing Sum of Non-Negatives]
	\label{lem:sum_zero}
	Let $\{v_i\}_{i=1}^n$ be a sequence of real numbers where $v_i \ge 0$ for all $i$. Then:
	\[
	\sum_{i=1}^n v_i = 0 \iff \forall i, v_i = 0
	\]
\end{lemma}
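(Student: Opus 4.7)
The plan is to prove the biconditional by handling its two directions separately. The reverse implication is immediate: if every $v_i = 0$, then the sum $\sum_{i=1}^{n} v_i$ reduces to a finite sum of zeros, which equals $0$ by elementary properties of addition in $\mathbb{R}$. All of the real content lies in the forward implication.

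For the forward direction, I would proceed by induction on $n$. The base case $n = 1$ is trivial since the sum is literally $v_1$. For the inductive step, I would split $\sum_{i=1}^{n+1} v_i = \left(\sum_{i=1}^{n} v_i\right) + v_{n+1}$. Both summands on the right are non-negative: the partial sum, because it is a finite sum of non-negatives in the ordered field $\mathbb{R}$ (a fact itself shown by a short side induction using $a,b \geq 0 \Rightarrow a+b \geq 0$); and $v_{n+1}$ by hypothesis. The two-term sub-claim --- that if $a, b \geq 0$ and $a+b = 0$ then $a = b = 0$ --- then forces each summand to vanish. Applying the inductive hypothesis to the partial sum then yields $v_1 = \cdots = v_n = 0$, and we already have $v_{n+1} = 0$, closing the induction.

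The two-term sub-claim is the only non-vacuous ingredient, and it is a direct consequence of the ordered-field axioms on $\mathbb{R}$: if $a > 0$, then $a + b \geq a + 0 = a > 0$, contradicting $a + b = 0$, so $a = 0$; by symmetry $b = 0$. One could equivalently collapse the whole forward direction into a single contradiction argument: assume some $v_k > 0$, split off that term, and bound $\sum_i v_i \geq v_k + 0 = v_k > 0$, contradicting the hypothesis.

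There is no genuine obstacle here --- the lemma is a finitary restatement of the order axioms. The only care required is to invoke the non-negativity hypothesis at every step, since dropping it breaks the statement immediately (e.g., $1 + (-1) = 0$ with a negative summand). Downstream, this lemma is the exact bridge needed to promote axiom (A1) from individual facts to the aggregate: combined with Definition~\ref{def:energy_fact}, it will yield that $E_{\text{total}}(\mathbf{x}) = 0$ if and only if every $\phi_{f_i}(\mathbf{x})$ is True, which is the soundness property the SRK ultimately requires.
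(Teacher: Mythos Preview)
Your proof is correct. Your primary route is an induction on $n$, reducing to the two-term sub-claim, whereas the paper dispatches the forward direction in one line by the global contradiction argument you yourself sketch at the end: assume some $v_k > 0$ and bound $\sum_i v_i \ge v_k > 0$. The paper's version is shorter and avoids the auxiliary side-induction showing partial sums remain non-negative; your inductive version is more explicit about exactly which order axioms are invoked and where, which can be useful if one later wants to formalize the argument in a proof assistant. Since you already identified the contradiction argument as an equivalent collapse of your induction, there is no real daylight between the two---the paper simply chose the compressed form as its official proof.
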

\begin{proof}
	The ($\Leftarrow$) direction is trivial. For ($\Rightarrow$), assume for contradiction that $\sum v_i = 0$ but there exists some $v_k > 0$. Since all other $v_j \ge 0$, the sum must be $\sum v_i \ge v_k > 0$. This contradicts the initial assumption. Thus, all $v_i$ must be zero.
\end{proof}

We now prove that minimizing the total energy is equivalent to satisfying the logical specification.

\begin{theorem}[Logical Correctness of the SRK]
	\label{thm:correctness}
	For any state $\mathbf{x} \in \mathbf{X}$, the total energy defined in \eqref{eq:total_energy} is zero if and only if every fact in $\mathcal{F}$ is logically true.
	\[
	E_{\text{total}}(\mathbf{x}) = 0 \iff \forall f \in \mathcal{F}, \phi_f(\mathbf{x}) = \text{True}
	\]
\end{theorem}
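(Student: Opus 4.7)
The plan is to obtain the theorem as a direct chain of two equivalences, one structural (turning the scalar condition $E_{\text{total}}(\mathbf{x}) = 0$ into a pointwise condition on each summand) and one semantic (turning each pointwise condition $E_{f_i}(\mathbf{x}) = 0$ into the logical assertion $\phi_{f_i}(\mathbf{x}) = \text{True}$). Both ingredients are already on the table: Lemma~\ref{lem:sum_zero} supplies the structural step, and axioms (A1)--(A2) of Definition~\ref{def:energy_fact} supply the semantic one.

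First I would fix an arbitrary $\mathbf{x} \in \mathbf{X}$ and an enumeration $\mathcal{F} = \{f_1, \dots, f_n\}$, and set $v_i := E_{f_i}(\mathbf{x})$. By axiom (A2) each $v_i \ge 0$, so the sequence $\{v_i\}_{i=1}^n$ meets the hypothesis of Lemma~\ref{lem:sum_zero}. Combining this with Definition~\ref{def:total_energy} gives immediately
\[
E_{\text{total}}(\mathbf{x}) = \sum_{i=1}^n v_i = 0 \iff \forall i \in \{1,\dots,n\},\ E_{f_i}(\mathbf{x}) = 0.
\]

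Next I would invoke axiom (A1) (Faithfulness) fact by fact: for each $i$, $E_{f_i}(\mathbf{x}) = 0$ is equivalent to $\phi_{f_i}(\mathbf{x}) = \text{True}$. Quantifying this equivalence over $i$ and chaining with the previous display yields
\[
E_{\text{total}}(\mathbf{x}) = 0 \iff \forall f \in \mathcal{F},\ \phi_f(\mathbf{x}) = \text{True},
\]
which is the claim. Since $\mathbf{x}$ was arbitrary, the theorem follows.

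There is no substantive obstacle here: the theorem is essentially a bookkeeping statement that the individual axioms on Energy Kernels lift, via non-negativity of the summands, to a global correctness guarantee for $E_{\text{total}}$. The only subtlety worth flagging in the write-up is implicit finiteness of $\mathcal{F}$ (used so that the sum in Definition~\ref{def:total_energy} and Lemma~\ref{lem:sum_zero} are literally finite sums); if one ever wanted to extend this to a countable fact set, non-negativity still yields the equivalence, but one would need to note absolute convergence before appealing to the analogue of Lemma~\ref{lem:sum_zero}.
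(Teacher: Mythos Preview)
Your proposal is correct and follows essentially the same approach as the paper's proof: both set $v_i = E_{f_i}(\mathbf{x})$, invoke (A2) to apply Lemma~\ref{lem:sum_zero}, and then use (A1) termwise to pass from $E_{f_i}(\mathbf{x})=0$ to $\phi_{f_i}(\mathbf{x})=\text{True}$. Your added remark on the implicit finiteness of $\mathcal{F}$ is a nice clarification but does not change the argument.
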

\begin{proof}
	Let $v_i = E_{f_i}(\mathbf{x})$ for each $f_i \in \mathcal{F}$. By Axiom (A2) of Definition \ref{def:energy_fact}, each $v_i$ is non-negative. The theorem's premise, $E_{\text{total}}(\mathbf{x}) = 0$, is equivalent to $\sum v_i = 0$. By Lemma \ref{lem:sum_zero}, this holds if and only if $v_i = 0$ for all $i$. Applying Axiom (A1) of Definition \ref{def:energy_fact} to each term, the condition $v_i = E_{f_i}(\mathbf{x}) = 0$ is equivalent to the predicate $\phi_{f_i}(\mathbf{x})$ being true. Therefore, the total energy vanishes if and only if all semantic predicates are satisfied.
\end{proof}

\begin{theorem}[Differentiability of the SRK]
	\label{thm:differentiability}
	The total energy functional $E_{\text{total}}$ is a smooth function on the manifold $\mathbf{X}$.
\end{theorem}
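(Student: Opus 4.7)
The plan is to reduce the statement to the elementary fact that the space $C^{\infty}(\mathbf{X})$ of smooth real-valued functions on a smooth manifold is closed under finite sums (indeed, it carries the structure of a commutative $\mathbb{R}$-algebra). Given this closure property, differentiability of $E_{\text{total}}$ follows directly from the smoothness axiom imposed on each component energy kernel.

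Concretely, I would proceed in three short steps. First, invoke Axiom (A3) of Definition \ref{def:energy_fact} to record that for every fact $f_i \in \mathcal{F}$ the kernel $E_{f_i}$ lies in $C^{\infty}(\mathbf{X})$. Second, note that $\mathcal{F} = \{f_1, \dots, f_n\}$ is a finite indexed set, so that the defining expression in \eqref{eq:total_energy} is a finite sum rather than a series; no convergence issues arise. Third, apply induction on $n$ using the elementary fact that if $g, h \in C^{\infty}(\mathbf{X})$ then $g + h \in C^{\infty}(\mathbf{X})$ — which itself follows pointwise from the linearity of the differential and, in local charts, from the standard result that sums of smooth maps $\mathbb{R}^m \to \mathbb{R}$ are smooth. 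Conclude $E_{\text{total}} \in C^{\infty}(\mathbf{X})$.

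There is no substantive obstacle to the argument; the theorem is essentially a bookkeeping consequence of Axiom (A3) together with the finiteness of $\mathcal{F}$. The only point worth flagging explicitly is the finiteness hypothesis: the conclusion would fail in general for an infinite family of facts without an additional local-finiteness assumption on $\{E_{f_i}\}$, since pointwise infinite sums of smooth functions need not be smooth (or even defined). Since Definition \ref{def:total_energy} states the fact set as $\{f_1,\dots,f_n\}$, this hypothesis is built in and the proof reduces to a one-line application of the $C^{\infty}$-algebra structure.
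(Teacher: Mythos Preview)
Your proposal is correct and follows essentially the same approach as the paper: invoke Axiom~(A3) so that each $E_{f_i}\in C^{\infty}(\mathbf{X})$, then use the closure of $C^{\infty}(\mathbf{X})$ under finite addition to conclude that the finite sum $E_{\text{total}}$ in \eqref{eq:total_energy} is smooth. Your added remarks on the inductive justification of closure and the necessity of the finiteness hypothesis are sound elaborations, but the argument is the same one the paper gives.
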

\begin{proof}
	From Axiom (A3) of Definition \ref{def:energy_fact}, each individual energy kernel $E_{f_i}$ is a smooth function. The space of smooth functions on a manifold, $C^\infty(\mathbf{X})$, is closed under finite addition. As $E_{\text{total}}$ is a finite sum of such functions as given by \eqref{eq:total_energy}, it is also a member of $C^\infty(\mathbf{X})$.
\end{proof}

\nocite{*}
\bibliographystyle{unsrt}
\bibliography{mathesisref}
\end{document}